\def\eqref#1{Eq.~\ref{#1}}
\def\1{\bm{1}}
\def\vmu{{\bm{\mu}}}
\def\vb{{\bm{b}}}
\def\mA{{\bm{A}}}
\def\mD{{\bm{D}}}
\def\mP{{\bm{P}}}
\DeclareMathAlphabet{\mathsfit}{\encodingdefault}{\sfdefault}{m}{sl}
\SetMathAlphabet{\mathsfit}{bold}{\encodingdefault}{\sfdefault}{bx}{n}
\DeclarePairedDelimiter\abs{\lvert}{\rvert}
\DeclarePairedDelimiter{\norm}{\lVert}{\rVert} 
\let\oldabs\abs
\def\abs{\@ifstar{\oldabs}{\oldabs*}}
\let\oldnorm\norm
\def\norm{\@ifstar{\oldnorm}{\oldnorm*}}
\newtheorem{definition}{Definition}
\newtheorem{lemma}{Lemma}
\renewcommand\paragraph[1]{\subsection{#1}}
\title{Embedding Signals on Graphs with Unbalanced\\Diffusion Earth Mover's Distance}
\name{\parbox{\textwidth}{\centering%
    Alexander Tong$^{1,*}$%
    \quad Guillaume Huguet$^{2,*}$%
    \quad Dennis Shung$^{3,*}$%
    \quad Amine Natik$^{2}$%
    \quad Manik Kuchroo$^{4}$
    \quad Guillaume Lajoie$^{2}$%
    \quad Guy Wolf$^{2,\dagger}$%
    \quad Smita Krishnaswamy$^{4,1,\dagger}$}\thanks{$^{*}$ Equal contribution; $^{\dagger}$ Equal senior author contribution. This research was partially funded by NIH grant K23DK125718-01A1 [\emph{D.S.}]; IVADO PhD Excellence Scholarship [\emph{A.N.}]; CIFAR AI Chair, NSERC Discovery grant 03267 [\emph{G.W.}]; Chan-Zuckerberg Initiative grants 182702 \& CZF2019-002440 [\emph{S.K.}]; NSF career grant 2047856 [\emph{S.K.}]; Sloan Fellowship FG-2021-15883 [\emph{S.K.}]; and NIH grants R01GM135929 \& R01GM130847 [\emph{G.W., S.K.}]. The content provided here is solely the responsibility of the authors and does not necessarily represent the official views of the funding agencies. Correspondence to \textless{}guy.wolf@umontreal.ca\textgreater{} and \textless{}smita.krishnaswamy@yale.edu\textgreater{}}%
}
\address{%
    $^{1}$ Yale University, Dept. of Comp. Sci.; $^{3}$ Dept. of Medicine; $^{4}$ Dept. of Genetics, New Haven, CT, USA \\%
    $^{2}$ Universit\'{e} de Montr\'{e}al, Dept. of Math. \& Stat.; Mila -- Quebec AI Institute, Montreal, QC, Canada\\% 
}
\begin{document}

\maketitle

\begin{abstract}
%In modern relational machine learning it is common to encounter large graphs that arise via interactions or similarities between observations in many domains. Further, in many cases the target entities for analysis are actually signals on such graphs. We propose to compare and organize such datasets of graph signals by using an earth mover's distance (EMD) with a geodesic cost over the underlying graph. Typically, EMD is computed by optimizing over the cost of transporting one probability distribution to another over an underlying metric space. However, this is inefficient when computing the EMD between many signals. Here, we propose an unbalanced graph earth mover’s distance that efficiently embeds the unbalanced EMD on an underlying graph into an $L^1$ space, whose metric we call {\em unbalanced diffusion earth mover's distance (UDEMD)}. This leads to a nearest neighbors kernel over many signals defined on a large graph. Next, we show how this gives distances between graph signals that are robust to noise. Finally, we apply this to organizing patients based on clinical notes who are modelled as signals on the SNOMED-CT medical knowledge graph, embedding lymphoblast cells modeled as signals on a gene graph, and organizing genes modeled as signals over a large peripheral blood mononuclear (PBMC) cell graph. In each case, we show that UDEMD-based embeddings find accurate distances that are highly efficient compared to other methods.
In modern relational machine learning it is common to encounter large graphs that arise via interactions or similarities between observations in many domains. Further, in many cases the target entities for analysis are actually signals on such graphs. We propose to compare and organize such datasets of graph signals by using an earth mover's distance (EMD) with a geodesic cost over the underlying graph. Typically, EMD is computed by optimizing over the cost of transporting one probability distribution to another over an underlying metric space. However, this is inefficient when computing the EMD between many signals. Here, we propose an unbalanced graph EMD that efficiently embeds the unbalanced EMD on an underlying graph into an $L^1$ space, whose metric we call {\em unbalanced diffusion earth mover's distance (UDEMD)}. Next, we show how this gives distances between graph signals that are robust to noise. Finally, we apply this to organizing patients based on clinical notes, embedding cells modeled as signals on a gene graph, and organizing genes modeled as signals over a large cell graph. In each case, we show that UDEMD-based embeddings find accurate distances that are highly efficient compared to other methods.
\end{abstract}

\begin{keywords}
Optimal transport, graph signal processing, knowledge graph, graph diffusion
\end{keywords}

\section{Introduction}
%\todo{reference IPMs}

The task of comparing probability distributions is applicable to a wide variety of machine learning problems, giving rise to popular $\phi$-divergences such as the Kullback-Leibler (KL), Hellinger, or total variation (TV) divergences, which ignore the underlying geometry of their support. The Earth Mover's Distance (EMD), also known as the Monge-Kantorovich or Wasserstein Distance, explicitly takes into account this underlying geometry via a domain-specific ground distance, which has many advantages on empirical probability distributions~\cite{peyre_computational_2019, backurs_scalable_2020}. Here, we show that earth mover’s distances are useful in a new domain: that of graph signals. In modern relational machine learning, we encounter large graphs that arise via interactions between entities in many domains~\cite{bojchevski_deep_2021, snomed}. Features of such entities can be considered as signals on the graph. For such signals, which often tend to be noisy, we propose a new unbalanced graph earth mover’s distance, and use it to organize the signals and determine relationships between them.

Since graphs can contain tens (Cora)~\cite{bojchevski_deep_2021} to hundreds of thousands of nodes  (SNOMED-CT)~\cite{snomed}, there is a great need for this measure to be computationally efficient. While the Wasserstein distance is intuitively attractive, it presents computational challenges. Here, based on the recent diffusion EMD method~\cite{tong_diffusion_2021}, we show that an efficient unbalanced EMD between signals can be computed as the difference between graph convolutions of the signal with multiscale graph kernels. This unbalanced EMD can be computed in linear time with convergence guarantees and without solving an optimization problem. We call our distance {\em unbalanced diffusion earth mover's distance} (UDEMD).

While previous work on Wasserstein distance embedding mostly focused on its relation to the balanced optimal transport problem~\cite{indyk_fast_2003, shirdhonkar_approximate_2008, gavish_multiscale_2010, le_tree-sliced_2019, tong_diffusion_2021}, we propose an unbalanced Wasserstein embedding approach between large number of distributions defined as signals on graphs. Since graph signals tend to be noisy, an \textit{unbalanced} transport, which can choose not to transport parts of the data space when it is inefficient to do so, leads to more robust distances between graph distributions that are less sensitive to outliers in the signal.

We apply UDEMD to medical knowledge graphs using Systemized Nomenclature of Medicine - Clinical Terms (SNOMED-CT)~\cite{snomed}. We show that unbalanced diffusion EMD can be used to find meaningful distances between patients which successfully clusters patients into different diagnosis categories, and allows us to find relationships between patient features. We also apply UDEMD to single cell RNA-sequencing data where we can model both cells as signals on gene interaction graphs or genes as signals on cell-similarity graphs. In cases where the gene regulatory network is well known, researchers have shown that affinity between cells can be computed as an earth mover's distance~\cite{huizing_optimal_2021, bellazzi_gene_2021}. We show that UDEMD runs orders of magnitude faster than the Sinkhorn and network simplex methods used in those works, while maintaining accuracy. In cases where the gene regulatory network is not well known, we model the transposed problem, deriving groupings of genes that function similarly by modeling genes as expression values over single cells. Here, we show that the UDEMD provides robust distances that recapitulate ground truth gene groupings in single cell data from peripheral blood mononuclear cells (PBMCs).

%The remainder of the paper is structured as follows. In section~\ref{sec:preliminaries} we review graph signal processing, the Wasserstein metric and the unbalanced Wasserstein metric. In section~\ref{sec:setup}, we formulate the problem of embedding a robust earth mover's distance over a graph into $L^1$. In section~\ref{sec:udemd} we formulate the embedding of the unbalanced diffusion earth mover's distance both theoretically and algorithmically, and in section~\ref{sec:results} we empirically establish the efficiency, accuracy, and robustness of UDEMD.

\section{Preliminaries}\label{sec:preliminaries}
In this section we review the Wasserstein metric, embedding based methods for approximating it, and unbalanced optimal transport.
%\paragraph{Graph Signal Processing} We consider a graph $G = (V,E)$ equipped with a family of graph signals $\mathcal{F} = \{f_i: V \rightarrow \mathbb{R}; i \in 1, \ldots m\}$. In this work we will show how to build a kernel over this family of functions that measures the similarity between signals on the graph. One way to do this would be to measure the sum of squared edge weights of the two signals known as Tikhonov regularization i.e.
%\begin{equation}
%\sum_{e_{ab} \in E}\left ((f_i - f_j)(a) - (f_i - f_j)(b)\right)^2
%\end{equation}
%or with an $L^1$ norm, known as graph total variation~\cite{ye_optimization_2020}. However, these two functions only penalize local differences in the signals, ignoring differences outside of the immediate neighborhood of each node. Here, we perform unbalanced transport that takes into account larger scales and has an intuitive interpretation as $n \rightarrow \infty$, namely that of the Wasserstein metric over the geodesic metric.

%\todo{related work that does stuff on graphs}
The Wasserstein metric is a notion of distance between two measures $\mu, \nu$ on a measurable space $\Omega$ endowed with a metric $d(\cdot,\cdot)$ known as the ground distance. The primal formulation of the Wasserstein distance $W_d$, also known as the earth mover's distance, is defined as:
\begin{equation}\label{eq:primal}
    W_d(\mu, \nu) := \inf_{\pi \in \Pi(\mu, \nu)} \int_{\Omega \times \Omega} d(x, y) \pi(dx, dy),
\end{equation}
where $\Pi(\mu, \nu)$ is the set of joint probability distributions $\pi$ on the space $\Omega \times \Omega$, such that for any subset $\omega \subset \Omega$, $\pi(\omega \times \Omega) = \mu(\omega)$ and $\pi(\Omega \times \omega) = \nu(\omega)$. Also of interest is the entropy regularized Wasserstein distance~\cite{cuturi_sinkhorn_2013}, which reduces the computation to $O(n^2)$. This algorithm is extremely parallelizable, and works quite well even for a small number of iterations~\cite{backurs_scalable_2020}, and there are many works investigating how to scale this to larger problems.

However, when comparing a large number of signals (say $m$), we must solve the optimization for each pair of signals, i.e. $O(m^2)$ optimizations. For this reason, we turn to methods that approximate the dual of the Wasserstein metric, also known as the Kantorovich-Rubenstein dual formulation, which relies on witness functions. Many works optimize the cost over a modified family of witness functions such as functions parameterized by neural networks~\cite{arjovsky_wasserstein_2017, gulrajani_improved_2017, tong_fixing_2020}, functions defined over trees~\cite{indyk_fast_2003, le_tree-sliced_2019}, and wavelet bases~\cite{shirdhonkar_approximate_2008, gavish_multiscale_2010}. An efficient algorithm recently proposed is Diffusion EMD~\cite{tong_diffusion_2021}, it is based on a multi-scale representation of the signals. Indeed, it can be seen as a weighted average of the $L^1$ distances between two signals at different scales.

There are numerous formulations of \textit{unbalanced} optimal transport both to accommodate problems with unequal masses and to provide robustness to outlier points~\cite{peyre_computational_2019, balaji_robust_2020}. In general these can be formulated as a mixture between a pure optimal transport problem and a $\phi$-divergence. We focus on the formulation using the total variation, referred to as the TV-unbalanced problem:
\begin{equation}\label{eq:tvuw}
    \textup{TV-UW}_d(\mu, \nu) = \inf_s \left \{ \textup{W}_d(\mu + s, \nu) + \lambda \textup{TV}(\mu + s, \mu) \right \},
\end{equation}
where $\lambda = \min \{ \lambda_\mu,\lambda_\nu \}$ and $\lambda_\mu, \lambda_\nu$ control the relative cost of mass creation / destruction compared to transportation. Intuitively, we can think of \eqref{eq:tvuw} as minimizing over the ``teleporting'' mass $s$, that is too costly to transport. 

In the unbalanced optimal transport literature, most often considered is the KL-divergence formulation which can be solved efficiently in the case of entropic regularized problem \cite{chizat_unbalanced_2018, liero_optimal_2018, fatras_unbalanced_2021}, but is difficult to optimize stochastically as is possible in the balanced case, limiting scalability~\cite{genevay_learning_2018, fatras_learning_2020}. The TV-unbalanced problem (\eqref{eq:tvuw}) can be solved by adding a ``dummy point'' that is connected to every point with equal cost~\cite{caffarelli_free_2010, pele_fast_2009}. However, adding a dummy point removes the metric structure necessary for dual-based Wasserstein distances. It is not immediately obvious that \eqref{eq:tvuw} is efficiently computable while maintaining this structure. To address this issue, \cite{mukherjee_outlier-robust_2020} showed that the TV-unbalanced problem can be solved through cost truncation. Following their work, we will show that there is an embedding of distributions to vectors where the $L^1$ distance between vectors is equivalent to the $\textup{TV-UW}$ between the distributions.

\section{Unbalanced Diffusion Earth Mover's Distance}\label{sec:udemd}

While Diffusion EMD presented in~\cite{tong_diffusion_2021} can provide an earth mover's distance between graph signals, its formulation is not motivated by considering noisy signals on graphs or outliers, but rather geared to avoid high dimensional density estimation. Here, we focus on utilizing EMD to organize graph signals. Therefore, we are interested in distances that are immune to outlier spikes in the signals. While the multiscale smoothing proposed in~\cite{tong_diffusion_2021} is effective in handling noisy perturbation of the signals, it is less effective at dealing with outlier vertex components of the signal. However, as we show here, the construction can be adapted to consider unbalanced transport, which is essentially based on the idea that a more faithful earth mover's distance is given by a transport in which we ignore some of the mass -- particularly, mass that requires large transport costs. To incorporate this idea, we modify the formulation of \cite{tong_diffusion_2021} by only considering certain scales. This yields the Unbalanced Diffusion EMD (UDEMD), which is topologically equivalent to the total variation unbalanced Wasserstein distance. 

\begin{definition}\label{def:udemd}
The Unbalanced Diffusion Earth Mover's Distance (UDEMD) between two signals $\mu, \nu$ is
\begin{align}\label{eq:unbalanced_diffusion_emd_sum}
    \textup{UDEMD}_{\alpha, K}(\mu, \nu) &:= \sum_{v \in V} \sum_{k=0}^{K} \| g_{\alpha, k}(\mu(v)) - g_{\alpha, k}(\nu(v)) \|_1
\end{align}
where $0 < \alpha < 1/2$ is a meta-parameter used to balance long- and short-range distances, and
\begin{align}\label{eq:diffusion_emd}
    g_{\alpha, k}(\mu(v)) &:= 2^{-(K-k-1) \alpha} \big(\vmu^{({k+1})} - \vmu^{({k})}\big)
\end{align}
where $\vmu^{(t)}$ is short for $\vmu^{(t)}(v) = (\mP^{2^t} \mu)(v)$ and $K$ is the maximum scale considered.
\end{definition}
The scale $K$ relates to the unbalancing threshold (see Fig.~\ref{fig:ring_ground_dist} and discussion in Sec.~\ref{sec:unbalanced-theory}).  In practice, $\alpha$ is set close to 1/2, hence we drop the subscript and use the notation $\textup{UDEMD}_{K}$.

%\input{figures/illustrations}
%\todo{Smita---figure with an outlier component in the signal (distant spike) with balanced and unbalanced transport and reference equation 6}

% This idea can be built into the dual formulation framework by ignoring the larger scales (which are more difficult to compute). By limiting the scales, we can build distances that are topologically equivalent to the total variation unbalanced Wasserstein distance.

\subsection{Equivalence to (unbalanced) Wasserstein distance}
\label{sec:unbalanced-theory}

In \cite{mukherjee_outlier-robust_2020}, it was shown that truncated-cost optimal transport distances were equivalent to unbalanced Wasserstein distances, and that they are useful in outlier detection. However, there the proposed implementation used a truncated matrix with the standard Sinkhorn algorithm~\cite{cuturi_sinkhorn_2013}. Here we show a similar result for the Unbalanced Diffusion EMD from Def.~\ref{def:udemd}, i.e., showing that with scale truncation it is equivalent to an unbalanced Wasserstein distance. We first adapt Theorem 3.1 from \cite{mukherjee_outlier-robust_2020} in the following Lemma~\ref{lem: Lemma_UOT}, which will in turn be combined with Lemma~\ref{lem:udemd} to yield this result.

\begin{lemma}
\label{lem: Lemma_UOT}
The Wasserstein distance with a truncated ground distance $d_\lambda(x,y) = \min \{\lambda, d(x,y)\}$ for some constant $\lambda$ and distance $d$ is equivalent to a total variation unbalanced Wasserstein distance for some constant $\lambda$, i.e., $W_{d_\lambda}(\mu, \nu) =  \textup{TV-UW}_d(\mu, \nu)$.
\end{lemma}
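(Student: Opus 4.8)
The plan is to prove the two inequalities $\textup{TV-UW}_d(\mu,\nu)\le W_{d_\lambda}(\mu,\nu)$ and $W_{d_\lambda}(\mu,\nu)\le\textup{TV-UW}_d(\mu,\nu)$ by a direct primal construction: from any feasible object on one side build a feasible object on the other side of no larger cost. Throughout I take $\mu(\Omega)=\nu(\Omega)$ so that the balanced transport $W_{d_\lambda}$ is well posed (when masses differ both sides are read through the dummy-point extension and the same argument applies), and I use $\textup{TV}(\alpha,\beta)=\tfrac12\|\alpha-\beta\|_{\textup{TV}}$, under which the truncation level and the TV penalty coefficient literally coincide — this is the force of ``for some constant $\lambda$'', other normalizations merely rescaling $\lambda$ by $2$.

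For $\textup{TV-UW}_d\le W_{d_\lambda}$, take an (almost) optimal coupling $\gamma$ of $\mu,\nu$ for the truncated cost and split it as $\gamma=\gamma_{\mathrm{near}}+\gamma_{\mathrm{far}}$ according to whether $d(x,y)\le\lambda$ or $d(x,y)>\lambda$, so $\int d_\lambda\,d\gamma=\int d\,d\gamma_{\mathrm{near}}+\lambda\,\gamma_{\mathrm{far}}(\Omega\times\Omega)$. Set $s:=q_{\#}\gamma_{\mathrm{far}}-p_{\#}\gamma_{\mathrm{far}}$ (second minus first marginal); then $\mu+s=p_{\#}\gamma_{\mathrm{near}}+q_{\#}\gamma_{\mathrm{far}}\ge 0$ has the same mass as $\nu$, the coupling $\gamma_{\mathrm{near}}$ together with the zero-cost diagonal coupling of $q_{\#}\gamma_{\mathrm{far}}$ with itself is a coupling of $\mu+s$ with $\nu$, and $\|s\|_{\textup{TV}}\le 2\,\gamma_{\mathrm{far}}(\Omega\times\Omega)$. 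This yields $W_d(\mu+s,\nu)\le\int d\,d\gamma_{\mathrm{near}}=\int d_\lambda\,d\gamma_{\mathrm{near}}$ and $\lambda\,\textup{TV}(\mu+s,\mu)\le\lambda\,\gamma_{\mathrm{far}}(\Omega\times\Omega)=\int d_\lambda\,d\gamma_{\mathrm{far}}$, hence $\textup{TV-UW}_d(\mu,\nu)\le\int d_\lambda\,d\gamma$; letting $\gamma$ approach the optimum finishes this direction. For the reverse inequality, fix an (almost) optimal $s$ and an optimal coupling $\pi$ of $\mu+s$ with $\nu$; using the Jordan decomposition $s=s^+-s^-$ one may assume $s^-\le\mu$ and $|s^+|=|s^-|=:m$, so that $\textup{TV}(\mu+s,\mu)=m$. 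Since $\mu-s^-\le p_{\#}\pi$, disintegrate $\pi$ along its first coordinate and split $\pi=\pi_1+\pi_2$ with $p_{\#}\pi_1=\mu-s^-$ and $p_{\#}\pi_2=s^+$; writing $\nu_2:=q_{\#}\pi_2$ (mass $m$), $\pi_1$ couples $\mu-s^-$ with $\nu-\nu_2$, and adjoining any coupling of $s^-$ with $\nu_2$ (e.g. $\tfrac1m s^-\otimes\nu_2$) gives a coupling $\gamma$ of $\mu,\nu$ with $\int d_\lambda\,d\gamma\le\int d\,d\pi_1+\lambda m\le W_d(\mu+s,\nu)+\lambda\,\textup{TV}(\mu+s,\mu)$, using $d_\lambda\le d$, $d_\lambda\le\lambda$ and $\pi_1\le\pi$. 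Taking the infimum over $s$ completes the proof.

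The main obstacle is not conceptual but the measure-theoretic bookkeeping: justifying the splitting of a coupling into pieces with prescribed first marginals (a disintegration / Radon–Nikodym step on the first coordinate), making the ``without loss of generality'' reductions on $s$ precise, and absorbing non-attainment of the infima into the usual $\varepsilon$ slack. In the only case we actually need — signals on a \emph{finite} graph — all of this trivializes: everything is finite-dimensional, the infima are attained, and the marginal splits are elementary.

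A cleaner alternative in that finite setting is a duality argument. By Kantorovich–Rubinstein, $W_{d_\lambda}(\mu,\nu)=\sup\{\langle f,\mu-\nu\rangle:\ f\ \text{$1$-Lipschitz w.r.t. }d,\ \max f-\min f\le\lambda\}$, because $|f(x)-f(y)|\le\min\{d(x,y),\lambda\}$ for all $x,y$ is exactly ``$1$-Lipschitz and oscillation at most $\lambda$''. On the other side, writing $W_d(\mu+s,\nu)$ and $\lambda\,\textup{TV}(\mu+s,\mu)$ in their dual forms and interchanging $\inf_s$ with the suprema (a finite-dimensional minimax, so LP strong duality applies) forces the two dual variables to be negatives of one another, and after recentering $f$ by a constant — which leaves $\langle f,\mu-\nu\rangle$ unchanged since $\mu,\nu$ have equal mass — one recovers precisely the same supremum. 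Here the only point needing care is the legitimacy of the minimax interchange, which is immediate for finite linear programs.
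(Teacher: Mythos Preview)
The paper does not actually supply a proof of this lemma: it simply states the result as an adaptation of Theorem~3.1 in \cite{mukherjee_outlier-robust_2020} and moves on. Your proposal is correct and self-contained, and the primal splitting you carry out --- cutting an optimal truncated-cost coupling into near/far pieces to produce a candidate $s$, and conversely splitting an optimal coupling of $\mu+s$ with $\nu$ along the Jordan decomposition of $s$ to rebuild a coupling of $\mu$ with $\nu$ --- is precisely the standard argument behind that cited theorem, so in spirit you are reproducing the reference rather than diverging from it. The measure-theoretic caveats you flag (disintegration to split a coupling along a decomposition of one marginal, $\varepsilon$-slack for non-attained infima) are the right ones, and as you note they evaporate in the finite-graph setting that the paper actually uses. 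Your alternative dual argument, recognizing that $1$-Lipschitz with respect to $d_\lambda$ is exactly $1$-Lipschitz with respect to $d$ together with an oscillation bound of $\lambda$, is a genuinely different and arguably cleaner route that neither the paper nor the cited reference takes; it buys you a one-line identification of the two problems via LP strong duality, at the cost of being specific to the finite (or at least compact) case where the minimax swap is immediate.
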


%Intuitively, any mass transported here (captured by $s$) at the truncated cost level $\lambda$, can be equivalently seen as being destroyed at the input point for cost $\lambda/2$ and created at the target point $\lambda/2$ for a total cost of $\lambda$.

The theory developed in \cite{tong_diffusion_2021} assumed that the support of the considered distributions was a closed Riemannian manifold. In such a case, Diffusion EMD will converge to a distance that is equivalent to the Wasserstein distance defined with the geodesic on the manifold. The following Lemma extends this theory to show that UDEMD (Def.~\ref{def:udemd}) will converge to a Wasserstein distance where the ground distance is a thresholded geodesic. %, i.e., $d_C(x,y) = \min \{C, \rho(x,y) \}$ for all $C\geq 1$.
% by considering, results from \cite{leeb_holderlipschitz_2016}

\begin{lemma}\label{lem:udemd}
$\textup{UDEMD}_{K}(\mu, \nu)$ approximates a metric equivalent to the Wasserstein distance $W_{d_\lambda}(\mu, \nu)$, defined as in Lemma~\ref{lem: Lemma_UOT}, with the ground distance being a truncated geodesic distance on the manifold, i.e., $d_{\lambda}(x,y) = \min \{\lambda, \rho(x,y) \}$ for $ \lambda > 0$.
\end{lemma}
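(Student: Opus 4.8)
The plan is to prove Lemma~\ref{lem:udemd} by running the equivalence argument for Diffusion EMD from \cite{tong_diffusion_2021} with only the finitely many scales $k=0,\dots,K$ retained, and by showing that this scale truncation has the same effect as replacing the geodesic ground distance $\rho$ by its truncation $d_\lambda=\min\{\lambda,\rho\}$ at a threshold $\lambda$ of the order of $2^{K/2}$, the spatial resolution of the heat kernel at diffusion time $2^K$. By Lemma~\ref{lem: Lemma_UOT} it suffices to establish that $\textup{UDEMD}_{K}(\mu,\nu)$ is, up to multiplicative constants depending only on $\alpha$ and on the manifold (and up to the same graph-to-manifold discretization error already present in \cite{tong_diffusion_2021}), comparable to $W_{d_\lambda}(\mu,\nu)$; the identification with $\textup{TV-UW}_\rho$ stated in the lemma then follows by composing with Lemma~\ref{lem: Lemma_UOT}. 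Write $\rho=\mu-\nu$, $T_k=\mP^{2^{k+1}}-\mP^{2^k}$, and $w_k=2^{-(K-k-1)\alpha}$, so that $\textup{UDEMD}_K(\mu,\nu)=\sum_{k=0}^{K}w_k\,\|T_k\rho\|_1$; since $0<\alpha$ the weights satisfy $\sum_{k=0}^{K}w_k\le C_\alpha<\infty$ uniformly in $K$, and since $\alpha<1/2$ they satisfy $\sum_{k=0}^{K}w_k\,2^{-k/2}\le C'_\alpha\,2^{-K\alpha}$.

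For the upper bound I would use that $\mu\mapsto(w_kT_k\mu)_{k=0}^{K}$ is linear, so $\textup{UDEMD}_K$ is an $L^1$ distance between embeddings and obeys the triangle inequality on signed measures: for any ``teleportation'' measure $s$, $\textup{UDEMD}_K(\mu,\nu)\le \textup{UDEMD}_K(\mu+s,\nu)+\textup{UDEMD}_K(\mu+s,\mu)$. The second term is $\sum_k w_k\|T_ks\|_1\le 2C_\alpha\|s\|_1$ because $\|T_k\|_{1\to1}\le 2$; the first is bounded using the heat-kernel smoothing estimate $\|T_k(\mu'-\nu)\|_1\le C\,2^{-k/2}\,W_\rho(\mu',\nu)$ (which drives the upper bound of \cite{tong_diffusion_2021}) together with $\sum_k w_k2^{-k/2}\le C'_\alpha2^{-K\alpha}$, giving $\textup{UDEMD}_K(\mu+s,\nu)\le C''2^{-K\alpha}W_\rho(\mu+s,\nu)$. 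Taking the infimum over $s$, using $\textup{TV}(\mu+s,\mu)\asymp\|s\|_1$, and factoring out the $2^{-K\alpha}$ gives $\textup{UDEMD}_K(\mu,\nu)\le C\,2^{-K\alpha}\inf_s\{W_\rho(\mu+s,\nu)+\Lambda\,\textup{TV}(\mu+s,\mu)\}=C\,2^{-K\alpha}\,\textup{TV-UW}_\rho(\mu,\nu)$ for an effective threshold $\Lambda\asymp 2^{K\alpha}$, which by Lemma~\ref{lem: Lemma_UOT} equals $C\,2^{-K\alpha}W_{d_\Lambda}(\mu,\nu)$; the fixed factor $2^{-K\alpha}$ is immaterial for metric equivalence, and a sharper version of this computation (not using the crude $2^{-k/2}$ bound) recovers the stated threshold $\lambda\asymp 2^{K/2}$.

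For the lower bound I would dualize: by Kantorovich--Rubinstein, $W_{d_\lambda}(\mu,\nu)=\sup\{\langle f,\rho\rangle: f\text{ is }1\text{-Lipschitz w.r.t.\ }d_\lambda\}$, and $f$ is $1$-Lipschitz w.r.t.\ $d_\lambda=\min\{\lambda,\rho\}$ exactly when it is $1$-Lipschitz w.r.t.\ the geodesic $\rho$ and has oscillation at most $\lambda$ (so, after subtracting a constant -- legitimate since $\rho$ has total mass zero -- $\|f\|_\infty\le\lambda/2$). For such a witness I would feed it into the diffusion Littlewood--Paley reproducing formula $f=(I-\mP)f-\sum_{k\ge0}T_kf$ (valid on mean-zero functions), splitting $\langle f,\rho\rangle=\langle f,(I-\mP)\rho\rangle-\sum_{k=0}^{K}\langle f,T_k\rho\rangle-\langle f,\mP^{2^{K+1}}\rho\rangle$ into a retained band $k\le K$, a finest term, and a coarse tail. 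The retained band and finest term are handled by the Leeb--Coifman-type synthesis/duality estimates underlying \cite{tong_diffusion_2021}, which convert the mismatch between the flat size $\|f\|_\infty$ and the geometric weights $w_k$ into a bound of the form $C\,2^{K\alpha}\,\textup{UDEMD}_K(\mu,\nu)$; this is exactly the step that uses $\alpha$ being close to $1/2$. The coarse tail $\langle f,\mP^{2^{K+1}}\rho\rangle$ is where the oscillation cap is essential: because $f$ is capped at $\lambda\asymp2^{K/2}$, which is precisely the resolution reached by diffusion time $2^{K+1}$, this term is dominated by the retained part rather than growing with $W_\rho(\mu,\nu)$. Combining, $W_{d_\lambda}(\mu,\nu)\le C\,2^{K\alpha}\,\textup{UDEMD}_K(\mu,\nu)$, matching the upper bound. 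Finally, the word ``approximates'' absorbs the passage from graph to manifold: exactly as in \cite{tong_diffusion_2021}, when the graph is built on $n$ samples of the closed manifold with a kernel whose bandwidth shrinks at the prescribed rate, the discrete operator $\mP$ converges to the heat semigroup and each of the finitely many quantities $\|T_k\rho\|_1$ converges to its continuum value with the rates proved there; truncating at $K$ only deletes terms and does not affect this convergence.

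The main obstacle is the lower bound: reconciling the \emph{hard} cutoff of scales at $K$ with the \emph{soft} truncation of the cost at $\lambda$. One must show both that discarding the coarse scales $k>K$ costs nothing, up to constants, when testing against witnesses of oscillation at most $\lambda$, and that the threshold produced in this way is comparable to the one extracted from the upper bound, so that the two estimates genuinely sandwich a single $W_{d_\lambda}$ (this is also where the hypothesis $0<\alpha<1/2$ and the recommendation to take $\alpha$ close to $1/2$ do real work, controlling the snowflaking of the effective ground distance). Technically this amounts to adapting the frame/duality estimates of \cite{tong_diffusion_2021} from the full scale range to a finite band; the remaining steps are either cited from \cite{tong_diffusion_2021} or short telescoping computations.
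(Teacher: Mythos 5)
Your route is genuinely different from the paper's, which does not attempt a two-sided estimate at all: the paper's proof is a three-step citation chain (the anisotropic kernel $\mP$ converges to the heat kernel by \cite{coifman_diffusion_2006-1}; by \cite{leeb_holderlipschitz_2016} the multiscale $L^1$ construction built from the heat kernel is equivalent to the Wasserstein distance with ground distance $\min\{1,\rho(x,y)^{2\alpha}\}$; and $\min\{1,\cdot\}$ and $\min\{\lambda,\cdot\}$ are equivalent metrics for any $\lambda>0$, so the induced Wasserstein distances are equivalent). In particular the paper deliberately does not quantify the relation between $K$ and $\lambda$ (it defers this to future work and only offers Fig.~2, which empirically suggests $\lambda\approx 2^K$, not your $2^{K/2}$). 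Your attempt to make the $K$-truncation itself produce the cost truncation is more ambitious and, if it worked, would be a stronger and more informative result.

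However, the lower bound has a concrete gap, and it sits exactly where you flagged ``the main obstacle.'' Your telescoping identity gives $f=(I-\mP)f-\sum_{k=0}^{K}T_kf+\mP^{2^{K+1}}f$ (note the sign on the last term), so the coarse tail pairs $f$ against the \emph{low-pass} component $\mP^{2^{K+1}}(\mu-\nu)$. That quantity is controlled by the \emph{discarded} scales $k>K$, not the retained band, and the oscillation cap $\|f\|_\infty\le\lambda/2$ alone does not make it ``dominated by the retained part.'' Concretely, take a circle of circumference $L\gg 2^{K/2}$ with $\mu,\nu$ uniform on the two halves: every retained band satisfies $\|T_k(\mu-\nu)\|_1=O(2^{k/2}/L)$, so $\sum_{k\le K}w_k\|T_k(\mu-\nu)\|_1=O(2^{K/2}/L)\to 0$ as $L\to\infty$, while $W_{d_\lambda}(\mu,\nu)\approx\lambda$ stays bounded below; no constant (even one depending on $K$ and $\alpha$) can close your sandwich. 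The fix is that the embedding must retain the low-pass band itself --- Alg.~1 does append $\vb_K=\vmu^{(2^K)}$, and with that term present the coarse tail is bounded by $(\lambda/2)\,\|\mP^{2^{K+1}}(\mu-\nu)\|_1$, which is part of the embedding distance --- but the sum you work with, $\sum_{k=0}^{K}w_k\|T_k(\mu-\nu)\|_1$ (and, read literally, Def.~1), omits it, so as written the argument fails. Secondary issues: your crude pairing $|\langle f,T_k(\mu-\nu)\rangle|\le\|f\|_\infty\|T_k(\mu-\nu)\|_1$ with $\|f\|_\infty\asymp 2^{K\alpha}$ and $(\min_k w_k)^{-1}=2^{(K-1)\alpha}$ produces $2^{2K\alpha}$, not the $2^{K\alpha}$ needed to match your upper bound, so you genuinely need the scale-by-scale Lipschitz estimates of \cite{leeb_holderlipschitz_2016} rather than the sup-norm bound; and the equivalence the paper actually obtains is to the \emph{snowflaked} truncated geodesic $\min\{\lambda,\rho^{2\alpha}\}$, which is only topologically (not bi-Lipschitz) comparable to $\min\{\lambda,\rho\}$, so the bi-Lipschitz sandwich you aim for is stronger than what the statement (or the paper's proof) actually claims.
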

% outline the proof a bit more of our previous paper.
\begin{proof}
%We present a proof sketch here and leave the full proof to the supplement. 
%The proof for equivalence follows the same lines as in Corollary 3.1 of \cite{tong_diffusion_2021}, but with a slightly relaxed setup. There it was shown that the discrete DiffusionEMD converges to the continuous case described explored in \cite{leeb_holderlipschitz_2016} with careful scaling of the heat kernel we have that all of our geodesics in the continuous domain are less than 1, while preserving . We note that the metrics $\min\{1,\rho(x,y)\}$ and $\min\{\lambda,\rho(x,y)\}$, $\lambda>0$, are equivalent up to scaling constants in the equivalence notion. Hence, it is also equivalent to $\rho(x,y)$ for $\lambda = \max_{x,y}\rho(x,y)$. Finally, it can be verified that equivalent ground metrics yield equivalent Wasserstein metrics.

We present a proof sketch here; the main part of the proof follows the same lines as in Corollary 3.1 of \cite{tong_diffusion_2021}. In Def.~\ref{def:udemd}, an anisotropic kernel $\mP$ is used, which can be shown to converge to the Heat kernel on a Riemannian manifold (see \cite{coifman_diffusion_2006-1}, Prop.~3). In \cite{leeb_holderlipschitz_2016}, it is shown that the construction of Def.~\ref{def:udemd} using the Heat Kernel will converge to a metric that is equivalent to the Wasserstein with ground distance $\min\{1,\rho(x,y)^{2\alpha}\}$, where $\rho$ is the geodesic on the manifold. Because the metrics $\min\{1,\rho(x,y)^{2\alpha}\}$ and $\min\{\lambda,\rho(x,y)^{2\alpha}\}$ are equivalent for $\lambda >0$, the Wasserstein distances induced by these metrics are also equivalent.
\end{proof}

By combining Lemmas~\ref{lem: Lemma_UOT} and \ref{lem:udemd}, we have that the UDEMD from Def.~\ref{def:udemd} approximates a metric equivalent to an unbalanced optimal transport metric. Formally, using the equivalence notation from~\cite{tong_diffusion_2021}, we have $\textup{UDEMD}_{K}(\mu, \nu) \simeq  \textup{TV-UW}_d(\mu, \nu)$.
%\begin{equation}
%\textup{UDEMD}_{\alpha, K}(\mu, \nu) \simeq \inf_s \left \{W_{\rho}(\mu+s, \nu) + \lambda TV(\mu+s, \mu) \right \}.
%\end{equation}
We note that while our result here establishes a relation between these two metrics, it does not directly quantify the relation between the $\lambda$ and $K$. We leave careful theoretical and rigorous study of this relation to future work\footnote{We deem the required nuanced treatment of scaling constants in equivalence bounds out of scope in this work.}, but mention here that we observe empirically, as shown in Fig.~\ref{fig:ring_ground_dist}, the choice of $K$ indeed acts in a similar way to the threshold $\lambda$ on the ground distance.

%First an example, consider any two diracs on a line graph that are further than $2^{K+1}$ nodes apart. This means that none of the scales will overlap, and any of these pairs will have equal UDEMD. This implies that UDEMD has a threshold for any geodesic distance $\rho(x, y)$ larger than $2^{K+1}$, i.e. when no scales overlap. 

\begin{figure}
\centering
\begin{subfigure}{.45\linewidth}
  \centering
  \includegraphics[width=.99\linewidth]{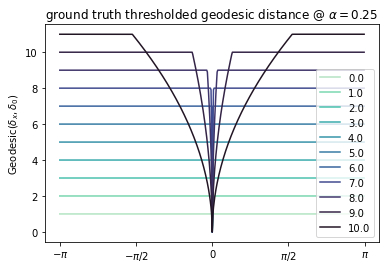}
  \vspace{-15pt}
  \caption{Thresholded geodesic}
  \label{fig:ring:sfig1}
\end{subfigure}%
\begin{subfigure}{.45\linewidth}
  \centering
  \includegraphics[width=.99\linewidth]{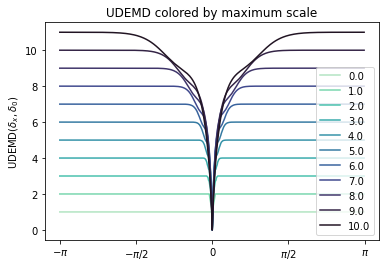}
  \vspace{-15pt}
  \caption{UDEMD between diracs}
  \label{fig:ring:sfig2}
\end{subfigure}

\vspace{-5pt}

\caption{On a ring graph $n=500$ compares the UDEMD to the thresholded ground distance, this suggests that UDEMD closely approximates the thresholded ground distance with $\lambda \approx 2^K$.}

\vspace{-15pt}

%\caption{Depicts the anomaly score over the plane of an autoencoder reconstruction model and our Lipschitz anomaly discriminator trained on \texttt{sklearn.datasets.make\_moons} data (Red = more normal, blue = more anomalous). Reconstruction is a poor proxy for data density.}
\label{fig:ring_ground_dist}
\end{figure}

\begin{algorithm}
    \caption{$\textup{UDEMD}(\mA, \vmu, K, \alpha) \rightarrow \vb$}
    \label{alg:cheb_embedding}
\begin{algorithmic}
    \STATE {\bfseries Input:} $n \times n$ graph adjacency $\mA$, $n \times m$ distributions $\vmu$, maximum scale $K$, and snowflake constant $\alpha$.
    \STATE {\bfseries Output:} $m \times (K+1) n$ distribution embeddings $\vb$
    \STATE $\mP = \mD^{-1} \mA$
    % What should be the weight here?
    \STATE $\vmu^{(2^0)} \leftarrow \vmu$
    \FOR{$k=1$ {\bfseries to} $K$}
        \STATE $\vmu^{(2^k)} \leftarrow \mP^{2^k} \vmu^{(2^{k-1})}$
        \STATE $\vb_{k-1} \leftarrow 2^{(K-k-1)\alpha}(\vmu^{(2^k)} - \vmu^{(2^{k-1})})$
    \ENDFOR
    \STATE $\vb_{K} \leftarrow \vmu^{(2^K)}$
    \STATE $\vb \leftarrow [\vb_0, \vb_1, \ldots, \vb_K]$
    \STATE $\tilde{\vb} \leftarrow \textup{Subsample}(\vb)$
\end{algorithmic}
\end{algorithm}

To compute the UDEMD defined in Def.~\ref{def:udemd}, we present Alg.~\ref{alg:cheb_embedding} with time complexity $O(2^K m |E|)$, which is similar to algorithms used in graph neural networks. Our algorithm scales well with the size of the graph, the number of distributions $m$ and number of points $n$, but poorly with the maximum scale $K$. We note that the maximum scale considered for Diffusion EMD was of order $O(\log |V|)$, derived from the convergence of the heat kernel to its steady state. Here, on the other hand, we decouple the tuning of $K$ and find that a much smaller maximum scale suffices, and in fact (as discussed in Sec.~\ref{sec:unbalanced-theory}) corresponds to a well characterized unbalanced earth mover's distance on the underlying geometry of the graph. This leads to Alg.~\ref{alg:cheb_embedding} emphasizing preferable scaling properties for small $K$, and easily accelerated by computation on GPUs.

%In practice, the bottleneck for computation of the UDEMD between many distributions is not computing the embeddings but comparing embeddings. Previous work has made use of locality sensitive hashing~\cite{indyk_fast_2003}, subsampling of significant elements~\cite{shirdhonkar_approximate_2008}, random sampling~\cite{le_tree-sliced_2019}, and interpolative decomposition~\cite{tong_diffusion_2021} to speed up nearest neighbor search in Wasserstein space~\cite{backurs_scalable_2020}. While all of these methods are compatible with UDEMD, we stick to random sampling over the nodes for simplicity.

\section{Results}\label{sec:results}

In this section, we show that UDEMD is an efficient and robust method for measuring distances between graph signals and then using the distances to find embeddings and organization of the signals (often entities such as patients). We compare UDEMD to a GPU implementation of numerically stabilized Sinkhorn optimization that includes minibatching of sets of distributions. However, despite this, this method runs out of memory when there are beyond ~10,000 nodes in the graph. We note that all methods of this type require solving $m^2$ optimizations, even when looking for nearest neighbors. Unless otherwise noted, we set $K=4$ and $\alpha=1/2$.

\textbf{Spherical data test case} To test the speed and robustness of UDEMD we begin with a dataset where we have knowledge of the intrinsic ground distances and can vary the number of points and distributions. For this dataset we sample $m$ Gaussian distributions with means distributed uniformly on the unit sphere with 10 points each for a total of $n=10 m$ points. We add a random noise spike at a uniformly random location on the sphere to check robustness to this type of noise. The goal is to predict the neighboring distributions on the sphere. We find that UDEMD is significantly more scalable and find that there is a sweet spot in terms of $K$ at $K=4$ for this dataset. The UDEMD with $K=4$ performs significantly better than the balanced Diffusion EMD case with this type of noise. This supports the claim that setting $K \ll O(\log n)$ is beneficial in real world datasets. UDEMD also outperforms the graph-TV distance as it is both faster and more accurate at $K=1$, and more accurate overall. 

\begin{figure}[ht]
\centering
\includegraphics[width=.95\linewidth]{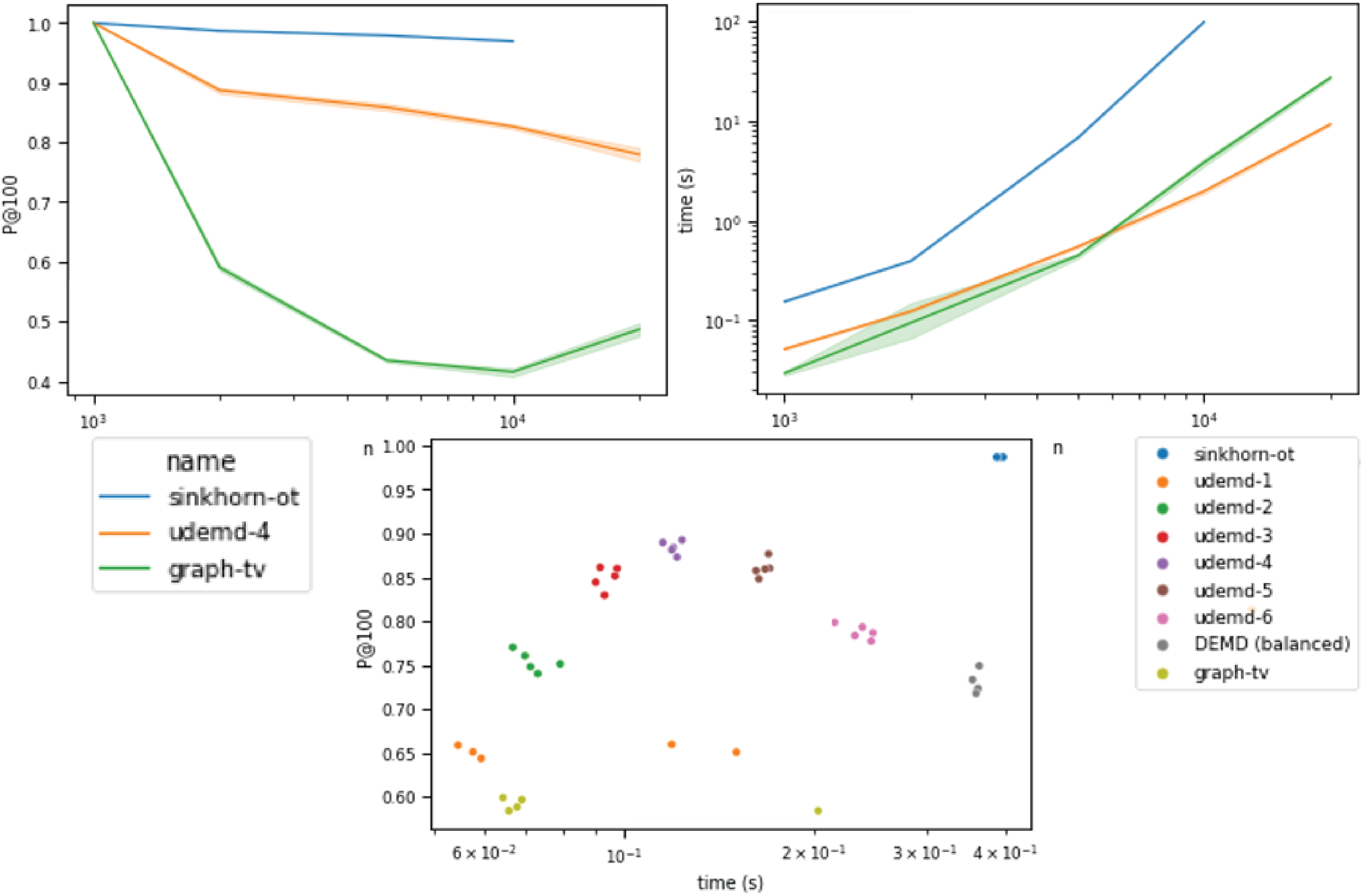}
\vspace{-2mm}
\caption{UDEMD is more scalable than Sinkhorn-OT and performs better than graph total variation. (left) Shows performance as measured by P@100, the fraction of the 100 nearest neighbors predicted correctly, against problem size. (middle) Shows time against problem size, and (right) shows performance vs. time on a problem size of $n=2000$ for different choices of $K$.}
\vspace{-2mm}
%\caption{Depicts the anomaly score over the plane of an autoencoder reconstruction model and our Lipschitz anomaly discriminator trained on \texttt{sklearn.datasets.make\_moons} data (Red = more normal, blue = more anomalous). Reconstruction is a poor proxy for data density.}
\label{fig:sphere}
\end{figure}

\textbf{Single-cell data with cells as signals over gene graphs} We consider 206 cells from the K562 human lymphoblast cell line as signals over a known 10000-node gene graph in single-cell RNA seq data~\cite{liu_deconvolution_2019}.  We measure the distance between cells based on their transport on this gene graph. This was recently independently proposed by \cite{huizing_optimal_2021} and \cite{bellazzi_gene_2021}, who showed that OT over the gene graph can provide better distances between cells than Euclidean measures.  We measure the performance of these methods based on how well the resulting distance matrix between cells matches the clusters according to four scores: Silhouette score, the adjusted rand index (ARI), the normalized mutual information (NMI), and the adjusted mutual information (AMI). In Fig.~\ref{fig:gene_graph}, we see that UDEMD performs almost as well as Sinkhorn-OT, and much better than the Euclidean and total variation distances that do not take into account the gene graph as well as much faster than Sinkhorn-OT, scaling almost as well as Euclidean distances due to the embedding. Note however, that using balanced transport (see Fig.~\ref{fig:sphere} right) degrades the accuracy. The balanced transport compared here is the original Diffusion EMD from \cite{tong_diffusion_2021} which is not a thresholded distance, and thus noise in the data are able to perturb the accuracy of the distances.

\begin{figure}[ht]
\centering
\begin{subfigure}{.47\linewidth}
  \centering
  \includegraphics[width=.99\linewidth]{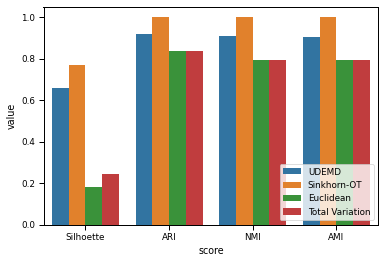}
  \vspace{-15pt}
  \caption{Performance metrics}
  \label{fig:gene:sfig1}
\end{subfigure}%
\begin{subfigure}{.47\linewidth}
  \centering
  \includegraphics[width=.99\linewidth]{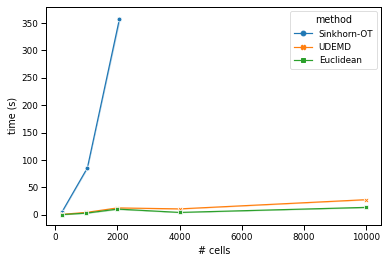}
  \vspace{-15pt}
  \caption{Computation comparison}
  \label{fig:gene:sfig2}
\end{subfigure}
\vspace{-5pt}
\caption{UDEMD achieves better clustering than Euclidean and total variation (TV) distances, and performs similarly well to Sinkhorn-OT but is much more scalable with similar scalability to Euclidean and TV distances. (a) performance in terms of Silhouette score, ARI, NMI, and AMI (b) computation time vs. problem size.}
\label{fig:gene_graph}
\end{figure}

%Higher is better. Computation time of the 10 nearest neighbor cells for three methods on increasing number of cells averaged over 5 runs.

\textbf{Single-cell data with genes as signals over cell graphs} Next we applied our approach to 4,360 peripheral blood mononuclear cells measured via single cell RNAseq publicly available on the 10X platform. We consider three curated gene sets that are explanatory for this dataset. We compare the distances between genes using UDEMD, Euclidean, total variation and Sinkhorn-OT distances. We can see that the genes canonical for monocytes (orange), T cells (green) and B cells (blue) all appear to be closely positioned to one another and separate between the groups in our embedding in contrast to a Euclidean distance embedding of the genes where the clusters are less clear (Fig.~\ref{fig:pbmc}a). Visualizing the UDEMD distance between our 46 genes in a heatmap, we can identify the three clusters as dark blocks of low distance (Fig.~\ref{fig:pbmc}b). This result is quantified in (Fig.~\ref{fig:pbmc}c), where UDEMD performs the best on 3/4 metrics. Last, we tried to see how diffusion time scale impacted silhouette score, identifying that score maximized at a timescale of $K=10$ and did not improve with higher scales.

%We first identify the 3 major cells types found in the blood, Monocytes, B cells and T cells, using 46 genes. We next built a graph of these 46 genes using our UDEMD approach and visualized it with metric MDS (Figure \ref{fig:pbmc}a). 

\begin{figure}[ht]
%\vskip 0.2in
\begin{center}
\centerline{\includegraphics[width=\columnwidth]{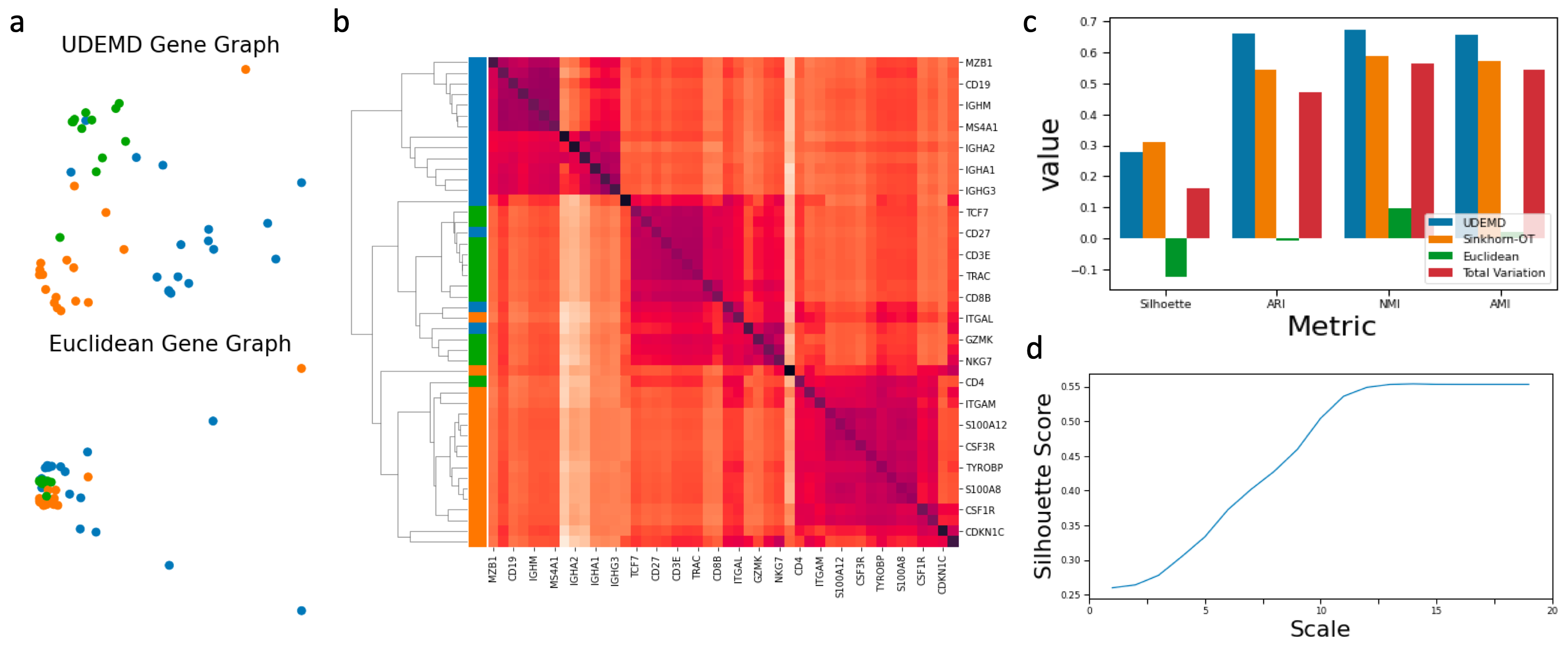}}
\vspace{-2mm}
\caption{(a) Visualization of gene graphs of 46 genes canonical for different cell types using UDEMD and Euclidean ground distances (blue for B cells, orange for monocytes and green for T cells), (b) heat map of gene distances (c) clustering performance (d) silhouette score vs. maximum diffusion scale $K$.}
\vspace{-9mm}
\label{fig:pbmc}
\end{center}
\end{figure}

%a) Visualization of graph of 46 genes built 
%measured via single cell RNAseq with PHATE~\cite{moon_visualizing_2019}. Celltypes identified using the expression of celltype specific genes. b) Heatmap of distance matrix computed by UDEMD between celltype specific genes. c) Comparison of distances computed with UDEMD with Sinkhorn-OT, Euclidean and total variation distance on 4 clustering metrics as used above. d) Tracking the effect of UDEMD scale on silhouette score.
%Celltypes identified using the expression of celltype specific genes. (b) Heatmap of distance matrix computed by UDEMD between celltype specific genes. (c) Comparison of distances computed with UDEMD with Sinkhorn-OT, Euclidean and total variation distance on 4 clustering metrics as used above. (d) Tracking the effect of UDEMD scale on silhouette score.

\textbf{A patient concept knowledge graph}
We consider a knowledge graph constructed from medical concepts captured in clinical documentation and reporting. SNOMED-CT is a widely used collection of terms and concepts with defined relationships considered to be an international standard for medical concepts captured from the electronic health record. SNOMED-CT has a pre-defined knowledge graph with concept-relation-concept triplets, which we subset to the Clinical Findings concept model (version 3/2021). We used 52,150 discharge summaries from MIMIC-III, which contain all information about a patient's hospital course and extracted concepts using MetaMap (version 2018)~\cite{metamap}. These medical concepts were then used as signals on the SNOMED-CT knowledge graph, which link all relevant concepts together. The metadata used to label patients included primary diagnosis, a physician-designated diagnosis which was stored separately in the MIMIC-III database.
%, and the discharge status after the conclusion of the hospital stay. 
%All clinical data from MIMIC-III was approved under the oversight of the Institutional Review Boards of Beth Israel Deaconess Medical Center (Boston, MA). Requirement for individual patient consent was waived because the project did not impact clinical care and all protected health information was deidentified. The data was available on PhysioNet were derived from protected health information that has been de-identified and not subject to HIPAA Privacy Rule restrictions. All use of the data was performed with credentialed access under the oversight of the data use agreement through PhysioNet.

One of the advantages of the UDEMD-based embedding is the identification of clinically meaningful overlaps that may not be apparent from the single primary diagnosis recorded in the database. Patients with a primary diagnosis of intracranial bleeding (bleeding in the brain) can also have primary brain masses and tumors. Compared to the spurious fragmentation of patients with the same diagnosis of intracranial bleeding into several clusters in the TV embedding, UDEMD consolidated patients with the same diagnosis of intracranial bleeding and specifically grouped those that may have had bleeding due to a primary brain mass or tumor (See Fig.~\ref{fig:snomed}b). Interestingly, UDEMD also identified patients who were predicted to have intracranial bleeding to have the diagnosis of stroke with higher accuracy, reflecting consistency with the fact that a subtype of stroke (hemorrhagic) is due to intracranial bleeding (Fig.~\ref{fig:snomed}d).

%UDEMD is also useful for identifying risk trajectories of patients with the same primary diagnosis as reflected by the utilization of healthcare services after discharge (see Figure \ref{fig:snomed}c). UDEMD preserves the continuum of patients with the primary diagnosis of acute coronary syndrome and organizes them by high risk (discharged to extended stays at other facilities for further recuperation) at the top and low risk (discharged home) at the bottom of the trajectory. 

\begin{figure}[ht]

\begin{center}
\centerline{\includegraphics[width=\columnwidth]{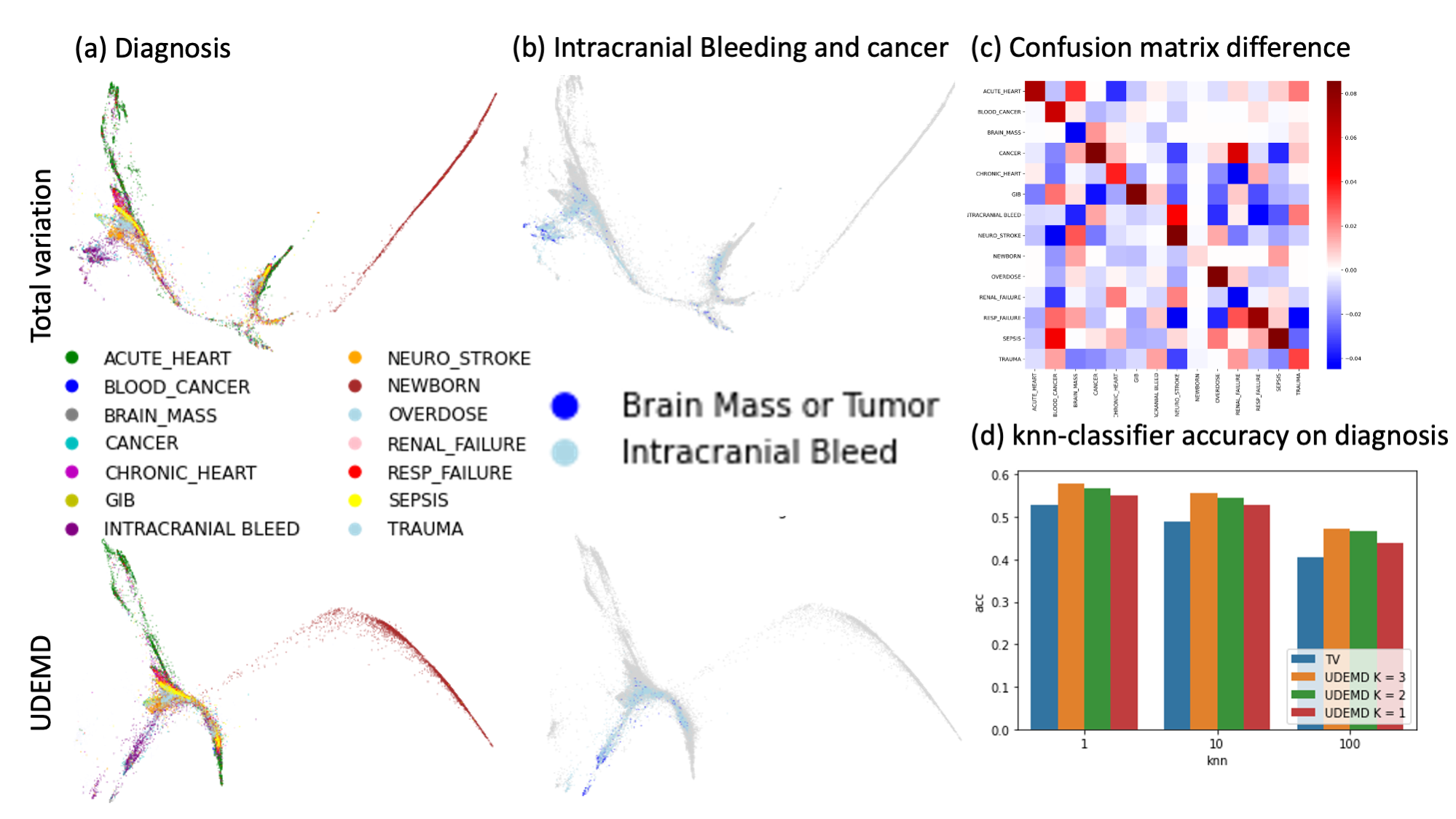}}
%\vskip -0.35in
\vskip -0.1in
\caption{Embeddings of patients modeled as signals over the SNOMED-CT graph using TV distance (a top) and using UDEMD distance (a bottom), colored by patient diagnosis. UDEMD better organizes the space as noted by selected terms in (b), difference of confusion matrices in (c) and k-nearest neighbors classification accuracy on the diagnosis in (d). In (b) note that the TV embedding (top) creates a spurious separation (due to noise in the signal) between subsets of patients who display intracranial bleeding that is not distinguished by diagnosis. On the other hand the UDEMD embedding (bottom) shows a continuum of patients with this diagnosis. The same holds for patients with brain mass or tumor shown in green. } %(c) UDEMD embedding organizes patients with acute coronary syndrome as a continuous trajectory with patients who are discharged (with milder cases) towards the bottom and more severe cases towards the top. The TV embedding again splits this trajectory. 
\label{fig:snomed}
\end{center}
\vskip -0.4in
\end{figure}

\section{Conclusion}

In this work we explored the use of earth mover's distance to organize signals on large graphs. We presented an unbalanced extension of Diffusion EMD, which we showed approximates a distance equivalent to the total variation unbalanced Wasserstein distance between signals on a graph. We showed how to compute nearest neighbors in this space in time log-linear in the number of nodes in the graph and the number of signals. Finally, we demonstrated how this can be applied to entities which can be modeled as signals on graphs between genes, cells, and biomedical concepts.

% \section{Acknowledgements}
% This research was partially funded by IVADO PhD Excellence Scholarship [\emph{A.N.}]; IVADO Professor startup \& operational funds, IVADO Fundamental Research Proj.\ grant PRF-2019-3583139727, Canada CIFAR AI Chair [\emph{G.W.}]; Chan-Zuckerberg Initiative grants 182702 \& CZF2019-002440 [\emph{S.K.}]; and NIH grants R01GM135929, R01GM130847 [\emph{G.W., S.K.}], K23DK125718-01A1 [\emph{D.S.}]. The content provided here is solely the responsibility of the authors and does not necessarily represent the official views of the funding agencies.

\clearpage

\balance

\bibliographystyle{IEEE}
\bibliography{main}

%%%%%%%%%%%%%%%%%%%%%%%%%%%%%%%%%%%%%%%%%%%%%%%%%%%%%%%%%%%%
%%%%%%%%%%%%%%%%%%%%%%%%%%%%%%%%%%%%%%%%%%%%%%%%%%%%%%%%%%%%

\end{document}